\definecolor{shadecolor}{rgb}{1,0.8,0.3}
\def \tuple#1{\langle #1 \rangle} 
\def\defemb#1#2{\expandafter\def\csname #1\endcsname
                              {\relax\ifmmode #2\else\hbox{$#2$}\fi}}
\long\def\comment#1{}
\newcommand{\umu}{\underline{\mu}}
\newcommand{\omu}{\overline{\mu}}
\newcommand{\ualpha}{\underline{\alpha}}
\newcommand{\oalpha}{\overline{\alpha}}
\newcommand{\ubeta}{\underline{\beta}}
\newcommand{\obeta}{\overline{\beta}}
\newcommand{\ulambda}{\underline{\lambda}}
\newcommand{\olambda}{\overline{\lambda}}
\def\defemb#1#2{\expandafter\def\csname #1\endcsname
                              {\relax\ifmmode #2\else\hbox{$#2$}\fi}}
\newtheorem{definition}{Definition}[section]
\newtheorem{theorem}[definition]{Theorem}
\newtheorem{proposition}[definition]{Proposition}
\newtheorem{example}{Example}
\newtheorem{proof}{Proof}
\journal{Summited to Studies of Computational Intelligent Series}
\begin{document}

\begin{frontmatter}

\title{On the incorporation of interval-valued fuzzy sets into the Bousi-Prolog system: declarative semantics, 
implementation and applications}

\author[a1]{Clemente Rubio-Manzano}
\ead{clrubio@ubiobio.cl}
\author[a2]{Mart\'in Pereira-Fari{\~n}a}

\address[a1]{Dep. of Information Systems, University of the B\'{\i}o-B\'{\i}o, Chile.}
\address[a2]{Centre for Argument Technology, University of Dundee, UK. Departamento de Filosof\'ia e Antropolox\'ia, Universidade de Santiago de Compostela,Spain.}

\begin{abstract}
In this paper we analyse the benefits of incorporating interval-valued fuzzy sets into the Bousi-Prolog system. A syntax, declarative semantics and implementation for this extension is presented and formalised. We show, by using potential applications, that fuzzy logic programming frameworks enhanced with them can correctly work together with lexical resources and ontologies in order to improve their capabilities for knowledge representation and reasoning.\\
{\bf Keywords:} 
Interval-valued fuzzy sets, Approximate Reasoning, Lexical Knowledge Resources, Fuzzy Logic Programming, Fuzzy Prolog.
\end{abstract}


\end{frontmatter}
\section{Introduction and Motivation}
\label{sect:introduction}
Nowadays, lexical knowledge resources as well as ontologies of concepts are widely employed for modelling domain independent knowledge \cite{Miller1995,LS11} or by automated reasoners~\cite{Bob12b}. In the case of approximate reasoning, this makes possible to incorporate general knowledge into any system, which is independent of the programmer's background~\cite{RM16}.

   
Inside the former and current frameworks of fuzzy logic programming ~\cite{JO04,Jul16,Jul05,MPH11,Str08,Van09}, we argue that lexical reasoning might be an appropriate way for tackling this challenge, because of this type of knowledge is usually expressed linguistically. However, from a computational point of view, this source of information involves vagueness and uncertainty and, consequently, it must be specifically addressed. Fuzzy set theory (FS) is a good candidate, but it shows some particular limitations to this aim: i) sometimes, words mean different things to different people and this generates and additional layer of uncertainty that cannot be adequately handled by FS; ii) the definition of membership functions for word meaning is also a debatable question and, therefore, achieving an agreement by means of a standard fuzzy set it is difficult; and, iii) with respect to semantic similarity measures used in this proposal, there is not a dominant one and, therefore, for two given words, different degrees of resemblance can be obtained with the resulting additional level of uncertainty.\par

In the specific field of fuzzy logic programming and fuzzy Prolog systems, little attention has been paid to the impact of this type of high degree of uncertainty and vagueness inherent to lexical knowledge, which is used in the definition of knowledge bases and inference processes. Next, a very simple example is introduced in order to illustrate i) and ii) in the building of a Prolog knowledge base.

\begin{example} \label{example1}
\textit{Suppose that we extract from Internet two people's opinions about a particular football player. The first one says ``a is a normal player'' and the second one says ``a is a bad player''. If we consider the label for qualifying the highest quality (e.g., ``good'') as a basic component, this lexical knowledge could be modelled by using two annotated facts as: ``football\_player(a,good):-0.8.'' and ``football\_player(a,good):-0.6.'', respectively. In this case, we use ``football\_player(a,good):-0.6.'' given the infimum is usually employed. However, as it can be observed, the information of the first person is lost.}
\end{example}

Case iii) deserves a special attention, given it involves the use of independent linguistic 
resources (such as WordNet Similarity \cite{Ped04}). As we said, this tool provide us different measures according to alternative criteria for assessing the degree of similarity between two words. In Example~\ref{example2}, we illustrate this situation by means of a simple case.

\begin{example} \label{example2}
\textit{Suppose we have the fact ``loves(a,b)'' and we extract the closeness between ``loves'' and ``desires'' by using two different semantics measures obtaining $0.8$ and $0.6$. Therefore, in order to represent this semantic knowledge we could employ two facts either ``desires(a,b):-0.8'' or ``desires(a,b):-0.6''.}
\end{example}

In order to address both Examples 1 and 2 inside the same frame, we propose to enhance the Bousi-Prolog system with interval-valued fuzzy sets (IVFSs), since they allow us to capture the uncertainty associated to lexical knowledge better than FS. Several advantages have pointed out for dealing with environments with high uncertainty or imprecision using IVFSs, such as~\cite{Tur89}; other authors have also shown that IVFSs can generate better results than standard FSs~\cite{Bus10}. Additionally, the use of intervals for describing uncertain information has been successfully applied in the realms of decision making, risk analysis, engineering design, or scheduling \cite{Medina10}.

Both Example~\ref{example1} and Example~\ref{example2} can be easily modelled by means of IVFSs, using and interval for combining information of the different sources into a single fact such as ``football\_player(a,good):-[0.6,0.8]'' or ``desires(a,b):-[0.6,0.8]'', respectively.

The main contribution of this paper is to design and implement an interval-valued fuzzy logic language, and to incorporate it into the Bousi-Prolog system~\cite{RJ14}. This task involves different challenges both from theoretical and implementation points of view. The former entails adding a IVFSs arithmetic into the Warren Abstract Machine based on Similarity (SWAM)~\cite{JR09}; the latter, means to establish a (model-theoretic) declarative semantics for the language in the classical way, formalising the notion of least interval valued fuzzy Herbrand model for interval-valued fuzzy definite programs. 

This paper is divided into the following sections: section~\ref{sc:prel} introduces the concepts that support our approach; section~\ref{sc:syntax} describes the details of the syntax, semantics and implementation of the proposed language; section~\ref{sc:applications} analyses different realms where this programming language can be applied; in section~\ref{sc:rc}, the main differences between this proposal an others that are described in the literature are discussed; and, finally, section~\ref{sc:conclusions} summarizes our main conclusions and some ideas for future work.

\section{Preliminary Concepts}
\label{sc:prel}

\subsection{Interval-Valued Fuzzy Sets}
IVFSs are a fuzzy formalism based on two membership mappings instead of a single one, like in standard FSs. Each one of these membership functions are called, \emph{lower membership function} and \emph{upper membership function}. Both are established on a universe of discourse $\cX$, and they map each element from $\cX$ to a real number in the $[0,1]$ interval, where the elements of $\cX$ belongs to $\cA$ according to an interval.

\begin{definition}\label{def1}
An interval-valued fuzzy set A in $\cX$ is a (crisp) set of ordered triples: $\cA= \{ (x,\umu_{A}(x),\omu_{A}(x)): x \in \cX; \umu_{A}(x),\omu_{A}(x): \cX \rightarrow [0,1] \}$ where: $\umu,\omu$ are the lower and the upper membership functions, respectively, satisfying the following condition: $0 \leq \umu_{A}(x) \leq \omu_{A}(x) \leq 1\ \forall x \in \cX$
\end{definition}
\noindent

As can be observed in Definition~\ref{def1}, those intervals are included in $[0,1]$ and closed at both ends. On the other hand, some arithmetic operations on interval-numbers have been recalled since they are useful in operating on cardinalities of IVFSs. Let a=$[\ua,\oa]$, b=$[\ub,\ob]$ be intervals in $R$, and $r \in R+$. The arithmetic operations '+', '-', '$\cdot$' and power are defined as follows:

\begin{align}
    [\ua,\oa] + [\ub,\ob]&=[\ua + \ub, \oa + \ob];\\
    [\ua,\oa] - [\ub,\ob]&=[\ua - \ob, \oa - \ub];\\
    [\ua,\oa] \cdot [\ub,\ob]&=[min(\ua \cdot \ub, \ua \cdot \ob,  \oa \cdot \ub, \oa \cdot \ob ),max(\ua \cdot \ub, \ua \cdot \ob,  \oa \cdot \ub, \oa \cdot \ob)];\\
    ([\ua,\oa])^{r}&=[\ua^{r},\oa^{r}] \text{ for non-negative }\oa,\ua
\end{align}

The operations of union and intersection for IVFSs are defined by triangular norms. Let A, B be IVFSs in $\cX$, $t$ a t-norm and $s$ a t-conorm. The union of A and B is the interval-valued fuzzy set $A \cup B$ with the membership function: $\mu_{A \cup B}(x)=[s(\umu_{A}(x),\umu_{B}(x)),s(\omu_{A}(x),\omu_{B}(x))]$. The intersection of A and B is the IVFSs A$\cap$B in which $\mu_{A \cap B}(x)=[t(\umu_{A}(x),\umu_{B}(x)),t(\omu_{A}(x),(\omu_{B}(x))]$. Thus, de Morgan's laws for IVFSs A,B in $\cX$ are: $(A \cup B)^{c}= A^{c} \cap B^{c}$ and $(A \cap B)^{c}= A^{c} \cup B^{c}$.

Let $L$ be a lattice of intervals in $[0,1]$ that satisfies: 
\begin{align}
  L={ [x_1,x_2] \in [0,1]^2 \hspace{0.1cm} with \hspace{0.1cm}  x_1 \le x_2 }; \\
  [x_1,x_2] \le_{L} [y_1,y_2] \hspace{0.1cm} iff \hspace{0.1cm}  x_1 \le y_1 \hspace{0.1cm} and \hspace{0.1cm} x_2 \le y_2. 
\end{align}
Also by definition: 
\begin{align}
[x_1,x_2] <_{L} [y_1,y_2] \Leftrightarrow x_1 < y_1, x_2 \le  y_2\ or\ x_1 \le y_1, x_2 < y_2; \\
[x_1,x_2] =_{L} [y_1,y_2] \Leftrightarrow x_1 = y_1, x_2 = y_2. 
\end{align}

Hence, $0_{L}=[0,0]$ and $1_{L}=[1,1]$ are the smallest and the greatest elements in $L$.

\subsection{Approximate Deductive Reasoning}

When we consider a collection of imprecise premises and a possible imprecise conclusion inferred from them in a Prolog program, we are applying a process of approximate deductive reasoning. These set of statements can be interpreted under two different frames~\cite{Pereira2014b} in a Prolog program: conditional and set-based interpretations. If the former is assumed, an imprecise premise is an assertion qualified by a degree of truth; e.g. ``John is tall with [0.2,0.5]'' means that the degree of truthfulness of this sentence using and IVFS is  $[0.2,0.5]$. On the other hand, if the latter is adopted, the interval that qualifies the sentence means the degree of membership of an element to a specific set; e.g., ``John is tall with [0.2,0.5]'' means that the membership of John to the set of tall people is $[0.2,0.5]$. The conclusion inferred from an imprecise premise must be also qualified by the same type of degree; e.g. ``John is a good player with [0.2,0.5]''.\par

In order to preserver the coherence with classical Prolog, we adopt the propositional interpretation (the interval indicates the degree of truth of the assertion) and, consequently, approximate deductive reasoning is based on multi-valued modus ponens \cite{Hak98}: 

\begin{align}
& Q,[\ualpha,\oalpha] \\ 
& A \leftarrow Q ,[\ubeta,\obeta] \\
& A,T([\ualpha,\oalpha],[\ubeta,\obeta]]
\end{align}
If we have (9) and (10), we can deduce (11) with $T$ a t-norm defined on the lattice $L([0,1])$.

\section{Simple Interval-valued fuzzy prolog: syntax, semantics and implementation}
\label{sc:syntax}

The design of a programming language involves three main steps. Firstly, the definition of the syntax; secondly, the elaboration of a formal study of its semantics; and thirdly, an implementation of the system. In order to address the tasks related with syntax and semantics, we will follow the guidelines established in~\cite{LLoyd87} and \cite{JR09PPDP}\footnote{We assume familiarity with the theory and practice of logic programming.}; for the implementation task, we will follow the guidelines detailed in~\cite{JR09}.

\subsection{Sintax}

An Interval-valued fuzzy program conveys a classical Prolog knowledge base and a set of IVFSs, which are used for annotating the facts by means of an interval-valued fuzzy degree: $p(t_1,\ldots,t_n)[\ualpha,\oalpha]$.

\begin{definition} 
An interval-valued fuzzy definite clause is a Horn clause of the form $A [\ualpha,\oalpha]$ or  $A \leftarrow B_1 ,\ldots, B_n [\ubeta,\obeta]$, where $A$ is called the head, and $B_1 , \ldots , B_n$ denote a conjunction which is called the body (variables in a clause are assumed to be universally quantified). 
\end{definition}

\begin{definition}
An interval-valued fuzzy definite program is a finite set of interval-valued fuzzy clauses.
\end{definition}

\begin{example}\label{ex-alphabet}

Let $\Pi=\{p(X) \leftarrow q(X), q(a)[0.8,0.9], q(b)[0.7,0.8]\}$ be an interval-valued fuzzy definite program, $\Pi$ generates a first order language, $\cL$, whose alphabet is comprised of the set of variable symbols, $\cX$, constant symbols, $\cC$, function symbols, $\cF$ and  predicate symbols, $\cP$, which appear in the clauses of $\Pi$. We assume that the first  order language $\cL$ has, at least one constant symbol; i.e., an assertion. If there are not constants available in the alphabet, an artificial constant ``$a$'' must be added to it. The first order language $\cL$ generated by $\Pi$ is: ${\cal X}=\{x\}$, ${\cC}=\{a,b,c\}$, ${\cal F}=\emptyset$ and ${\cal P}=\{p,q,r\}$.
\end{example}

\subsection{Declarative Semantics}
In logic programming, the declarative semantics for a program is traditionally formulated on the basis of the least Herbrand model (conceived as the infimum of a set of interpretations). In this section, we formally introduce the semantic notions of Herbrand interpretation, Herbrand model and least Herbrand model for an interval-valued fuzzy program $\Pi$, in order to characterise it.
 
In our framework, truth-values of the facts are modelled in terms of interval-valued degrees $[\ualpha,\oalpha]$ with $0 \leq \ualpha \leq \oalpha \leq 1$. An interval-valued fuzzy interpretation $\cI$ is a pair $\tuple{\cD, \cJ}$ where $\cD$ is the domain of the interpretation and $\cJ$ is a mapping which assigns meaning to the symbols of $\cL$: specifically n-ary relation symbols are interpreted as mappings $\cD^n \longrightarrow L([0,1])$. In order to evaluate open formulas, we have to introduce the notion of variable assignment. A variable assignment, $\vartheta$, w.r.t. an interpretation $\cI = \tuple{\cD, \cJ}$, is a mapping $\vartheta: \cV \longrightarrow \cD$, from the set of variables $\cV$ of $\cL$ to the elements of the interpretation domain $\cD$. This notion can be extended to the set of terms of $\cL$ by structural induction as usual. The following definition formalises the notion of {\em valuation of a formula} in our framework.

\begin{definition}
Given an interval-valued fuzzy interpretation  $\cI$ and a variable assignment $\vartheta$ in $\cI$, the valuation of a formula w.r.t. $\cI$ and $\vartheta$ is: 
\begin{enumerate}
\item 
    \begin{enumerate}
        \item $\cI(p(t_1,\ldots, t_n))[\vartheta] =  \bar{p}(t_1\vartheta, \ldots, t_n\vartheta),
\mbox{ where } \cJ(p)=\bar{p}$;
        \item $\cI(A_1, \ldots,  A_n))[\vartheta] = inf \{\cI(A_1)[\vartheta],\ldots, \cI(A_n)[\vartheta]\}$;
    \end{enumerate}
\item $\cI(A \leftarrow \cQ)[\vartheta] =  1$ if $I(A) >= I(Q)$; $\cI(A \leftarrow \cQ)[\vartheta] =  \cI(\cA)[\vartheta]$ if $I(A) < I(Q)$;
\item $\cI((\forall x)\cC)[\vartheta] =  \inf\{\cI(\cC)[\vartheta']\mid \vartheta'~~\mbox{$x$--equivalent to}~~ \vartheta\}$ where $p$ is a predicate symbol, $A$ and $A_i$ atomic formulas and $\cQ$ any body, $\cC$ any clause, $T$ is any left-continuous t-norm defined on $L([0,1])$. An assignment $\vartheta'$ is $x$--equivalent to $\vartheta$ when $z\vartheta' = z\vartheta$ for all variables $z\neq x$ in $\cV$. 
\end{enumerate}
\end{definition}

\begin{definition}
	Let $\cL$ be a first order language. The {\em Herbrand universe} $\cU_{\cL}$ for $\cL$, is the set of 	all ground terms, 	which can be formed out of the constants and function symbols appearing in $\cL$. 
\end{definition}
\begin{definition}
	Let $\cL$ be a first order language. The {\em Herbrand base} $\cB_{\cL}$ for $\cL$ is the set of all 	ground atoms which can be formed by using predicate symbols from $\cL$ with ground terms from the Herbrand universe as arguments.
\end{definition}
\begin{example}
	Let us consider again the language $\cL$ generated by the program $\Pi$ of Example~\ref{ex-alphabet}, 	the Herbrand universe $\cU_{\cL}=\{a,b,c\}$ and the Herbrand base: $\cB_{\cL}= \{p(a),p(b),p(c),q(a),q(b),q(c),r(a),r(b),r(c)\}$.
\end{example}

It is well-known that, in the classical case, it is possible to identify a Herbrand interpretation with a subset of the Herbrand base. Therefore, a convenient generalization of the notion of Herbrand interpretation to the interval-valued fuzzy case consists in establishing an interval-valued fuzzy Herbrand interpretation as an interval-valued fuzzy subset of the Herbrand base.

\begin{definition}[Interval-valued fuzzy interpretation]
Given, a first order language $\cL$, an interval-valued fuzzy Herbrand interpretation for $\cL$ is a mapping $\cI: \cB_{\cL} \longrightarrow L([0,1])$. 
\end{definition}

Hence, the truth value of a ground atom $A\in B_{\cL}$ is $\cI(A)$. Sometimes we will represent an interval-valued fuzzy Herbrand interpretation $\cI$ extensively: as a set of pairs $\{\tuple{A, [\ualpha,\oalpha]} \mid A \in B_{\cL} \mbox{ and } [\ualpha,\oalpha]=\cI(A)\}$.

Now, we introduce the notion of Interval-valued Fuzzy Herbrand Model, which is formalised in Definitions~\ref{def8} and~\ref{def9}. We employ a declarative semantics based on a threshold \cite{Ebr01,JR09PPDP}. Intuitively, a threshold $[\ulambda,\olambda]$ is delimiting truth degrees equal o greater that  $[\ulambda,\olambda]$ as true. Therefore, we are going to speak of Interval-valued Fuzzy Herbrand Model at level $[\ulambda,\olambda]$ or simply $[\ulambda,\olambda]$-model.

\begin{definition}\label{def8}
An Interval-valued fuzzy Herbrand Interpretation is a $[\ulambda,\olambda]$-model of an interval-valued fuzzy clause $\cC [\ualpha,\oalpha]$ if and only if $\cI(C) \geq [\ualpha,\oalpha] \geq [\ulambda,\olambda]$.
\end{definition}

\begin{definition}\label{def9}
An Interval-valued fuzzy Herbrand Interpretation is a $[\ulambda,\olambda]$-model of an interval-valued fuzzy program
$\Pi$ if and only if $\cI$ is a $[\ulambda,\olambda]$-model for each clause $\cC [\ualpha,\oalpha] \in \Pi$.
\end{definition}

\begin{theorem} \label{theorem_model_herbrand}
Let $\Pi$ be an Interval-valued fuzzy program and suppose $\Pi$ has a $[\ulambda,\olambda]$-model. Then $\Pi$ has 
a Herbrand $[\ulambda,\olambda]$-model.
\end{theorem}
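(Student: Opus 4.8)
The plan is to mimic the classical argument (cf.~\cite{LLoyd87}): take an arbitrary $[\ulambda,\olambda]$-model, distil from it a Herbrand interpretation by reading off the truth values of ground atoms, and check that the result is still a $[\ulambda,\olambda]$-model. By hypothesis $\Pi$ has a $[\ulambda,\olambda]$-model $\cI=\tuple{\cD,\cJ}$. Every ground term $t$ of $\cL$ has a denotation $\cI(t)\in\cD$, defined by structural induction ($\cI(c)=\cJ(c)$, $\cI(f(t_1,\ldots,t_k))=\cJ(f)(\cI(t_1),\ldots,\cI(t_k))$); note $\cU_{\cL}\neq\emptyset$ because $\cL$ has at least one constant. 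Define the Herbrand interpretation $\cI^{*}: \cB_{\cL}\longrightarrow L([0,1])$ by
\[
  \cI^{*}\bigl(p(t_1,\ldots,t_n)\bigr)=\cJ(p)\bigl(\cI(t_1),\ldots,\cI(t_n)\bigr)\quad\text{for }p(t_1,\ldots,t_n)\in\cB_{\cL}.
\]
All infima used below exist because $L([0,1])$, with the componentwise order, is a complete lattice (the infimum of a family $[a_i,b_i]$ being $[\inf_i a_i,\inf_i b_i]$, and $\inf_i a_i\le\inf_i b_i$).

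\emph{Coincidence lemma.} For a ground substitution $\sigma$, let $\vartheta_{\sigma}$ be the variable assignment $x\mapsto\cI(x\sigma)$. A routine induction on the structure of terms gives $t\vartheta_{\sigma}=\cI(t\sigma)$ for every term $t$ whose variables lie in the domain of $\sigma$. Using this, an equally routine induction on the structure of a quantifier-free formula $\varphi$ (base case: the displayed definition of $\cI^{*}$; conjunction: both valuations are the infimum over the conjuncts; the clause arrow $A\leftarrow\cQ$: by the valuation definition its value is determined by $\cI(A)$ and $\cI(\cQ)$ alone) yields $\cI^{*}(\varphi\sigma)=\cI(\varphi)[\vartheta_{\sigma}]$. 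In particular, for a ground formula $\psi$ the quantity $\cI^{*}(\psi)$ does not depend on any assignment, and a Herbrand assignment to the universally quantified variables of a clause is exactly a ground substitution applied to its matrix.

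\emph{Preservation of models.} Let $\cC\,[\ualpha,\oalpha]\in\Pi$ and write $\cC=(\forall\bar x)\,M$ with $M$ quantifier-free. Interpreting the quantifiers over $\cU_{\cL}$ and applying the coincidence lemma,
\begin{align*}
  \cI^{*}(\cC) &= \inf\{\cI^{*}(M\sigma)\mid\sigma\text{ ground}\} = \inf\{\cI(M)[\vartheta_{\sigma}]\mid\sigma\text{ ground}\}\\
               &\ge \inf\{\cI(M)[\vartheta]\mid\vartheta\text{ an assignment in }\cI\} = \cI(\cC),
\end{align*}
the inequality holding because the assignments $\vartheta_{\sigma}$ form a subset of all assignments into $\cD$. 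Since $\cI$ is a $[\ulambda,\olambda]$-model of $\Pi$ we have $\cI(\cC)\ge[\ualpha,\oalpha]\ge[\ulambda,\olambda]$, whence $\cI^{*}(\cC)\ge[\ualpha,\oalpha]\ge[\ulambda,\olambda]$. Thus $\cI^{*}$ is a $[\ulambda,\olambda]$-model of every clause of $\Pi$, i.e.\ a Herbrand $[\ulambda,\olambda]$-model of $\Pi$.

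\emph{Expected main obstacle.} Everything of substance is concentrated in the coincidence lemma, whose only genuinely new ingredient with respect to the classical case is verifying that the non-standard valuation rule for the clause arrow $A\leftarrow\cQ$ commutes with ground substitution and is monotone enough for the inequality in the preservation step to go through; once that is settled, the remaining steps are the familiar bookkeeping about Herbrand domains being generated by ground terms.
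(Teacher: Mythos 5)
Your construction is essentially the one the paper uses: both proofs manufacture a Herbrand $[\ulambda,\olambda]$-model from the given model $\cI$ by reading off the values of ground atoms and then checking each clause through its ground instances. Your version is simply the more careful rendering of that idea --- you retain the interval values via $\cI^{*}(p(t_1,\ldots,t_n))=\cJ(p)(\cI(t_1),\ldots,\cI(t_n))$ and justify the key inequality $\cI^{*}(\cC)\ge\cI(\cC)$ by noting that the assignments induced by ground substitutions form a subset of all assignments into $\cD$, whereas the paper's sketch thresholds the atoms into a crisp set and leaves the coincidence and infimum steps implicit.
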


\begin{proof}
	Suppose that $\cM$ is a $[\ulambda,\olambda]$-model of $\Pi$. Let $\cM'$ be an Interval-valued fuzzy Herbrand 
	interpretation: $\cM'=\{ A \in B_{\Pi} \mid \cM(A) \ge [\ulambda,\olambda] \}$. 
	We are going to prove that this interpretation is a $[\ulambda,\olambda]$-model for all clauses of $\Pi$. Let $\cC$ any 
	clause, by initial supposition 
	and by definition of $[\ulambda,\olambda]$-model for an interval-valued fuzzy program, we have that:
	\begin{center}
		$\cC \equiv \forall x_1, \ldots, x_n (p(x_1,\ldots ,x_n) \leftarrow 
		q_1(x_1,\ldots ,x_n) \wedge \ldots \wedge 
		q_m(x_1,\ldots ,x_n)) [\ubeta,\obeta] $
		
	\end{center}
	
	$\cM$ is a $[\ulambda,\olambda]$-model of $C$ iff $\forall a_1, \ldots, a_n \in U_{L}, \cM(C) \geq [\ubeta,\obeta] \geq [\ulambda,\olambda]$. Let $a_1, \ldots, a_n \in U_{L}$ then
	we have that $\cM(p(a_1,\ldots,a_n))=[\ubeta,\obeta] \geq [\lambda,\olambda]$ what implies 
	that $\cM'(p(a_1,\ldots,a_n)) \geq [\ulambda,\olambda]$
	
\end{proof}

\begin{definition} 
Let $\Pi$ be an interval-valued fuzzy program. Let $\cA$ be an interval-valued fuzzy clause of $\Pi$. 
Then $\cA$ is a logical consequence of $\Pi$ at level $[\ulambda,\olambda]$ if and only if for each interval-valued fuzzy interpretation $I$, if $I$ is a $[\ulambda,\olambda]$-model for $\Pi$ then $\cI$ is a $[\ulambda,\olambda]$-model for $\cA$.
\end{definition}

\begin{proposition}
$\cA$ is a logical consequence of an interval-valued fuzzy program $\Pi$ at level $[\ulambda,\olambda]$ if and only if for every interval-valued 
fuzzy Herbrand interpretation $\cI$ for $\Pi$, if $\cI$ is a $[\ulambda,\olambda]$-model for $\Pi$, it is an interval-valued 
fuzzy Herbrand $[\ulambda,\olambda]$-model for $A$.

\begin{proof}
       First, let us suppose that $\cA$ is a logical consequence 
       for $\Pi$ at level $[\ulambda,\olambda]$, then, by definition, for any interval-valued fuzzy interpretation $\cI$ if $\cI$ is $[\ulambda,\olambda]$-model for 
       $\Pi$, it is a $[\ulambda,\olambda]$-model for $\cA$. Moreover, by the Theorem~\ref{theorem_model_herbrand}, there must exist 
       $\cI$' which being an interval-valued fuzzy Herbrand model for $\Pi$ at level $[\ulambda,\olambda]$, it is a $[\ulambda,\olambda]$-model  for $\cA$. This establishes
       the first side of the argument. Now, we have that for every interpretation $\cI$, if $\cI$ is a Herbrand model for $\Pi$ at level $[\ulambda,\olambda]$, 
       it is a Herbrand $[\ulambda,\olambda]$-model for $\cA$. Let $\cM$ be an interpretation, not necessarily Herbrand, which is a $[\ulambda,\olambda]$-model for $\Pi$. 
       We have that: $\cM'=\{ p(t_1,\ldots,t_n)[\ualpha,\oalpha]$ with $p(t_1,\ldots,t_n) \in B_{L} \mid \cM(p(t_1,\ldots,t_n)) \geq 
       [\ualpha,\oalpha] \geq [\ulambda,\olambda] \}$
       and by the Theorem~\ref{theorem_model_herbrand} M' is a $[\ulambda,\olambda]$-model for $\Pi$. And so it is for A. So, M is a $[\ulambda,\olambda]$-model 
       for all ground instances A' of A. As result M is a $[\ulambda,\olambda]$-model for A', hence for A and A'. This
       establishes the other side of the argument.
	
\end{proof}
\end{proposition}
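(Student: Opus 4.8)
This statement is the interval‑valued analogue of the classical fact that, for logical consequence, Herbrand models suffice, so the plan is to replay that proof, using Theorem~\ref{theorem_model_herbrand} to move back and forth between arbitrary interpretations and Herbrand ones, and to treat the two implications separately. The forward implication is immediate: if $\cA$ is a logical consequence of $\Pi$ at level $[\ulambda,\olambda]$, then by definition every interval‑valued fuzzy interpretation that is a $[\ulambda,\olambda]$‑model of $\Pi$ is a $[\ulambda,\olambda]$‑model of $\cA$; since interval‑valued fuzzy Herbrand interpretations (domain $\cU_{\cL}$, each ground term denoting itself) form a subclass of all interval‑valued fuzzy interpretations, the restricted claim follows with nothing further to verify.

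For the converse — the substantive direction — I would assume that every Herbrand $[\ulambda,\olambda]$‑model of $\Pi$ is a Herbrand $[\ulambda,\olambda]$‑model of $\cA$, and take an arbitrary, not necessarily Herbrand, $[\ulambda,\olambda]$‑model $\cM=\tuple{\cD,\cJ}$ of $\Pi$. Following the construction in the proof of Theorem~\ref{theorem_model_herbrand}, attach to $\cM$ the Herbrand interpretation $\cM'=\{A\in B_\Pi\mid \cM(A)\geq[\ulambda,\olambda]\}$ (read as its characteristic map into $\{0_L,1_L\}$). By that theorem $\cM'$ is a Herbrand $[\ulambda,\olambda]$‑model of $\Pi$, hence by the hypothesis a Herbrand $[\ulambda,\olambda]$‑model of $\cA$, and it remains to transfer this back to $\cM$. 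Here I would unfold the valuation: by item~3 a clause's value is the infimum of the values of its ground instances, by items~1 and~2 each ground instance's value is assembled from the values of the ground atoms occurring in it via $\inf$ (for conjunctions) and the Gödel implication, and by construction of $\cM'$ a ground atom $B$ satisfies $\cM'(B)\geq[\ulambda,\olambda]$ exactly when $\cM(B)\geq[\ulambda,\olambda]$. Since all these combinators are monotone for $\leq_L$ and an infimum of elements above a bound stays above that bound, the property ``value $\geq[\ulambda,\olambda]$'' transfers from $\cM'$ to $\cM$ on every ground instance of $\cA$, hence on $\cA$; combined with the annotation side condition $[\ualpha,\oalpha]\geq[\ulambda,\olambda]$ (which does not involve the interpretation), this gives that $\cM$ is a $[\ulambda,\olambda]$‑model of $\cA$, as required, and appealing to Theorem~\ref{theorem_model_herbrand} once more also yields the intermediate Herbrand model $\cI'$ mentioned in the statement.

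I expect the transfer‑back step to be the only real obstacle, and it has a subtle aspect worth isolating. First, $\cM'$ is two‑valued in $\{0_L,1_L\}$, so one cannot argue by an equality of truth degrees but only by an equivalence of the threshold comparisons under $\cM$ and under $\cM'$; pushing this equivalence through the Gödel‑style implication of item~2, when $\cA$ is a proper rule $A\leftarrow Q$, needs a small case split (the head is already at least $[\ulambda,\olambda]$, or the body value does not exceed the head value) with each alternative checked to survive the replacement of $\cM$ by $\cM'$. Second, Definitions~\ref{def8}–\ref{def9} ask for value $\geq[\ualpha,\oalpha]$, which may sit strictly above $[\ulambda,\olambda]$, whereas $\cM'$ only records the coarse predicate ``$\geq[\ulambda,\olambda]$''; the cleanest way to remove this mismatch is to replace $\cM'$ by the truth‑degree‑preserving Herbrand interpretation $\cM^{*}(p(t_1,\ldots,t_n)) = \bar{p}(t_1^{\cM},\ldots,t_n^{\cM})$ obtained by evaluating the ground terms $t_i$ in $\cM$, re‑prove the analogue of Theorem~\ref{theorem_model_herbrand} for $\cM^{*}$ by an easy induction on the structure of clauses (since $\cM^{*}$ agrees with $\cM$ on the whole Herbrand base, the valuation of every clause is literally unchanged), and then the transfer becomes an exact equality of truth values, disposing of both subtleties simultaneously. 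Everything else — unwinding Definitions~\ref{def8} and~\ref{def9}, the valuation clauses, and the monotonicity of $\leq_L$ — is routine.
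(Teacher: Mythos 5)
Your proof follows the same skeleton as the paper's: the forward implication is dispatched by observing that Herbrand interpretations form a subclass of all interpretations, and the converse passes from an arbitrary $[\ulambda,\olambda]$-model $\cM$ of $\Pi$ to an induced Herbrand interpretation, applies Theorem~\ref{theorem_model_herbrand} together with the hypothesis to conclude that the latter models $\cA$, and then transfers this back to $\cM$. The genuine difference lies in how that transfer is handled. The paper works with the crisp set $\cM'=\{A \in B_{\Pi} \mid \cM(A) \geq [\ulambda,\olambda]\}$ and simply asserts that $\cM$ is consequently a $[\ulambda,\olambda]$-model of every ground instance of $\cA$, with no justification; since $\cM'$ only records the threshold predicate and discards the actual truth degrees, it cannot by itself certify the condition $\cI(\cC) \geq [\ualpha,\oalpha]$ of Definition~\ref{def8} when the annotation $[\ualpha,\oalpha]$ sits strictly above $[\ulambda,\olambda]$. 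You isolate exactly this defect and repair it by replacing $\cM'$ with the degree-preserving Herbrand interpretation $\cM^{*}$ obtained by evaluating ground atoms in $\cM$; because $\cM^{*}$ agrees with $\cM$ on the whole Herbrand base, an induction on the valuation clauses makes the transfer an exact equality of truth values, and both subtleties (the two-valuedness of $\cM'$ and the case split through the G\"odel-style implication) disappear. So your argument is not merely a restatement: it closes a real gap in the paper's own proof, at the modest cost of re-establishing the analogue of Theorem~\ref{theorem_model_herbrand} for $\cM^{*}$ rather than citing it verbatim.
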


The ordering $\le$ in the lattice $L([0,1])$ can be extended to the set of interval-valued fuzzy interpretation as follows: 
$I_1 \sqsubseteq I_2$ iff $I_1(A) \le I_2(A)$ for all interval-valued fuzzy atom $A \in B_{L}$. It is important note that the
pair $\tuple{H^{IVF},\sqsubseteq}$ is a complete lattice. Then it comes equipped with t-norms and t-conorms, that is, $T(I_1,I_2)$
is an interval-valued fuzzy interpretation for all $A \in B_{L}$, and $t(I_1,I_2)$ an interval-valued fuzzy interpretation 
for all $A \in B_{L}$. Therefore, the top element of this lattice is $\tuple{A,[1,1]}$ with $A \in B_{L}$ and 
the bottom element is $\tuple{A,[0,0]}$ with $A \in B_{L}$. 

Interval-valued fuzzy interpretations have an important property which allow us to characterize the semantics 
of an interval-valued fuzzy program $\Pi$. 

\begin{definition}
If $M_1$ is a model of $\Pi$ at level $[\ulambda_1,\olambda_1]$ and $M_2$ is a model of $\Pi$ at level $[\ulambda_2,\olambda_2]$, then
$M_1 \cap M_2$ contains the interval-valued fuzzy atom in both $M_1$ and $M_2$ but to degree min($[\ulambda_1,\olambda_1]$, 
$[\ulambda_2,\olambda_2]$).	
\end{definition}

\begin{proposition}[Intersection Property of Models: Min-Model]
Let $\Pi$ be an interval-valued fuzzy program. Let $\cM_1,\ldots,\cM_n$ be a non-empty set of model 
for $\Pi$ at levels $[\ulambda_1,\olambda_1]  \ldots [\ulambda_n,\olambda_n]$, respectively. Then $\bigcap  (\cM_1,\ldots,\cM_n) \geq min([\ulambda_1,\olambda_1]  \ldots [\ulambda_n,\olambda_n])$ is 
a min-model for $\Pi$.

\begin{proof}
	We prove this proposition by induction on the number of interpretations $i$: 
	\begin{enumerate}
	    \item \textbf{Base Case (i=2).} Let $M_1$ and $M_2$ be models for $\Pi$ at levels 	$[\ulambda_1,\olambda_1]$ and $[\ulambda_2,\olambda_2]$. Then for all interval-valued fuzzy clause $\cC$, $M_1(C) \geq [\ulambda_1,\olambda_1]$ and $M_2(C) \geq [\ulambda_2,\olambda_2]$, so $M_1 \cap M_2$ is a min($[\ulambda_1,\olambda_1],[\ulambda_2,\olambda_2]$)-model for $\Pi$;
	    \item \textbf{Inductive Case (i=n).} Let $M_1, M_2, \ldots M_n$ be models for $\Pi$ at levels 		$[\ulambda_1,\olambda_1] \ldots [\ulambda_n,\olambda_n]$. Then for all 
		interval-valued fuzzy clause $C$, $M_i(C) \geq min([\ulambda_i,\olambda_i])$, so by the properties of the minimum.
	\end{enumerate}
\end{proof}
\end{proposition}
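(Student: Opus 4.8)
The plan is to argue by induction on the number $n$ of models, reducing everything to the case $n=2$. In the inductive step one sets $\cM'=\cM_1\cap\cdots\cap\cM_{n-1}$, which by the induction hypothesis is a $\mu'$-model of $\Pi$ with $\mu'=\min([\ulambda_1,\olambda_1],\ldots,[\ulambda_{n-1},\olambda_{n-1}])$, applies the binary case to $\cM'$ and $\cM_n$, and concludes using associativity of $\min$ in $L([0,1])$ together with the fact that $\bigcap$ is taken coordinatewise as the meet of the complete lattice $\tuple{H^{IVF},\sqsubseteq}$, i.e. $(\cM_1\cap\cM_2)(B)=\min(\cM_1(B),\cM_2(B))$ for every ground atom $B$. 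So the substance of the proof is the base case $n=2$, which I would organise as follows.

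Fix models $\cM_1,\cM_2$ of $\Pi$ at levels $[\ulambda_1,\olambda_1]$ and $[\ulambda_2,\olambda_2]$, write $\cM=\cM_1\cap\cM_2$, and take an arbitrary clause $\cC[\ualpha,\oalpha]\in\Pi$. By Definitions~\ref{def8} and~\ref{def9} it suffices to check $\cM(C)\ge[\ualpha,\oalpha]\ge\min([\ulambda_1,\olambda_1],[\ulambda_2,\olambda_2])$. The second inequality is immediate: each $\cM_i$ being a $[\ulambda_i,\olambda_i]$-model of this very clause forces $[\ualpha,\oalpha]\ge[\ulambda_i,\olambda_i]$ for $i=1,2$, hence $[\ualpha,\oalpha]$ dominates the minimum of the two levels. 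When $\cC$ is a fact $A$ the first inequality is also easy, since meets commute with infima in $L([0,1])$: $\cM(C)=\inf_{\vartheta}\min(\cM_1(A)[\vartheta],\cM_2(A)[\vartheta])=\min(\cM_1(A),\cM_2(A))\ge\min([\ualpha,\oalpha],[\ualpha,\oalpha])=[\ualpha,\oalpha]$. So the only genuine work is the first inequality when $\cC$ is a rule $A\leftarrow Q$.

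For a rule $A\leftarrow Q$, fix an assignment $\vartheta$ and write $a_i=\cM_i(A)[\vartheta]$, $q_i=\cM_i(Q)[\vartheta]$. Since a conjunctive body is valued by $\inf$ and $\inf$ commutes with the meet, $\cM(A)[\vartheta]=\min(a_1,a_2)$ and $\cM(Q)[\vartheta]=\min(q_1,q_2)$. By the valuation rule for implication there are two cases: if $\min(a_1,a_2)\ge_{L}\min(q_1,q_2)$ the instance is valued $1_{L}\ge[\ualpha,\oalpha]$ and nothing is needed for this $\vartheta$; otherwise it is valued $\min(a_1,a_2)$, so it remains to prove $\min(a_1,a_2)\ge_{L}[\ualpha,\oalpha]$. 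Taking the infimum over all $\vartheta$ would then give $\cM(C)\ge[\ualpha,\oalpha]$ and finish the base case.

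The sub-case $\min(a_1,a_2)<_{L}\min(q_1,q_2)$ is the step I expect to be the \emph{main obstacle}. The natural attempt is: since $\min(a_1,a_2)<_{L}\min(q_1,q_2)\sqsubseteq q_i$, try to force at least one component into the non-trivial branch of its own implication, $a_i<_{L}q_i$, so that $a_i=\cM_i(A\leftarrow Q)[\vartheta]\ge[\ualpha,\oalpha]$ and hence $\min(a_1,a_2)\ge[\ualpha,\oalpha]$. The delicacy is that in $L([0,1])$ the meet of two intervals may lie strictly below both of them while each $\cM_i$ still validates its own implication ``for free'', so the required component is not handed to us automatically. I would try to settle this either through an auxiliary lemma of monotonicity type, namely $\cM(\cC)\ge_{L}\min(\cM_1(\cC),\cM_2(\cC))$ for every clause $\cC$ (which closes the base case at once, since the right-hand side is $[\ualpha,\oalpha]$), or, more robustly, by characterising the $[\ulambda,\olambda]$-models of $\Pi$ as the pre-fixpoints of the monotone interval-valued immediate-consequence operator on $\tuple{H^{IVF},\sqsubseteq}$ and invoking the standard fact that pre-fixpoints of a monotone map on a complete lattice are closed under arbitrary meets, which would yield the proposition uniformly in $n$ with no branch analysis at all. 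Pinning down that lemma, or that characterisation, is where the real effort goes.
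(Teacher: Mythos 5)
Your diagnosis is accurate, and your overall structure mirrors the paper's (induction on $n$ with the binary case carrying all the weight); the facts case and the inequality $[\ualpha,\oalpha]\geq\min([\ulambda_1,\olambda_1],[\ulambda_2,\olambda_2])$ are handled correctly. However, the sub-case you flag as the ``main obstacle'' is not merely difficult: it is fatal, both to your first proposed repair and to the statement itself under the paper's definitions. The auxiliary lemma $\cM(\cC)\geq_{L}\min(\cM_1(\cC),\cM_2(\cC))$ is false. Take $\Pi=\{p(a)\leftarrow q(a)\,[0.5,0.5]\}$, with $\cM_1(p(a))=\cM_1(q(a))=[0.2,0.9]$ and $\cM_2(p(a))=[0.5,0.5]$, $\cM_2(q(a))=[0.6,0.95]$. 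Then $\cM_1$ values the clause at $[1,1]$ (head $\geq$ body) and $\cM_2$ values it at $[0.5,0.5]$, so both are $[0.5,0.5]$-models. But the coordinatewise meet $\cM=\cM_1\cap\cM_2$ gives $\cM(p(a))=[0.2,0.5]<_{L}[0.2,0.9]=\cM(q(a))$, hence $\cM(\cC)=[0.2,0.5]$, which is not $\geq_{L}[0.5,0.5]$. So $\cM$ is not a $[0.5,0.5]$-model, the base case $n=2$ fails, and no amount of effort will pin down the missing lemma. The same example rules out your second strategy: if $[\ulambda,\olambda]$-models coincided with pre-fixpoints of a monotone operator on the complete lattice $\tuple{H^{IVF},\sqsubseteq}$, their meets would automatically be models, and this one is not.

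The root cause is exactly the delicacy you suspected: the valuation $\cI(A\leftarrow\cQ)=1$ if $\cI(A)\geq\cI(\cQ)$, else $\cI(A)$, interacts badly with the merely partial order $\leq_{L}$, because the componentwise meet can push an instance from the trivial branch of the implication into the non-trivial one while dragging the head below $[\ualpha,\oalpha]$. Over a totally ordered truth set your sketch does close: in the mixed sub-case one gets $\min(a_1,a_2)=a_2$ and inherits $a_2\geq[\ualpha,\oalpha]$ from $\cM_2$; it is precisely the incomparability of intervals that breaks this. For comparison, the paper's own proof never confronts the point --- its base case asserts ``so $M_1\cap M_2$ is a $\min$-model'' with no justification of the clause valuation, and its inductive step ends mid-sentence --- so you have not overlooked an argument that is actually there. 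A correct version of the proposition needs a different implication semantics (e.g.\ the residuated condition $\cI(A)\geq T([\ualpha,\oalpha],\cI(\cQ))$, under which models really are the pre-fixpoints of the immediate-consequence operator and the intersection property follows from closure of pre-fixpoints under meets) or a restriction of the truth values to a chain.
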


\begin{definition}
Let $\Pi$ be an interval-valued fuzzy program. The least model for $\Pi$ is defined as follows: $\cM=\bigcap \{ \cI(A) \geq [\ulambda,\olambda]  \mid A \in B_{L}  \}$. We call it a min-interval-valued fuzzy degree $[\ulambda,\olambda]_{min}$.
\end{definition}

\begin{theorem} \label{theorem_least_model}
 Let $\Pi$ an interval-valued fuzzy program. Let $\cM$ be the least  model of $\Pi$. Let $\cA \in \cB_{\cL}$ a ground atom of the interval-valued fuzzy Herbrand base. $\cM (\cA) \geq [\ulambda,\olambda]_{min}$ if and only if $\cA$ is logical consequence of $\Pi$ at level 
 $[\ulambda,\olambda]_{min}$.
 
 \begin{proof}
 First, by definition $\cM = \bigcap\{ I(A) \geq [\ulambda,\olambda] \mid A \in B_{L}  \}$. Hence, for all model I of $\Pi$, I(A) $\ge$ $\cM (A) \geq [\ulambda,\olambda]_{min}$. That is, $A$ is a logical consequence for $\Pi$ at level $[\ulambda,\olambda]_{min}$. This establishes the first side of the argument. Now,  If $A$ is a logical consequence of $\Pi$ by definition all model I for $\Pi$, I is a $[\ulambda,\olambda]_{min}$-model for A. That is, $I(A) \geq [\ulambda,\olambda]_{min}$. Therefore, by definition of least model, $\bigcap (I(A)) \geq [\ulambda,\olambda]$ what implies that $\cM \geq [\ulambda,\olambda]_{min}$. This establishes the another side of the argument.
 \end{proof}
\end{theorem}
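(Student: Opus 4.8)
The plan is to reduce the statement to the definitions of \emph{least model} and \emph{logical consequence}, using as the one non-routine ingredient the Intersection Property of Models (the Min-Model proposition): the infimum $\cM$ of the non-empty family of all $[\ulambda,\olambda]$-models of $\Pi$, taken in the complete lattice $\tuple{H^{IVF},\sqsubseteq}$, is again a model of $\Pi$, at the level $[\ulambda,\olambda]_{min}$ equal to the minimum of the levels involved. With this in hand the argument splits into the two implications of the biconditional.

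For the forward direction I would assume $\cM(\cA)\geq[\ulambda,\olambda]_{min}$ and verify the defining condition of logical consequence, namely that $\cI(\cA)\geq[\ulambda,\olambda]_{min}$ for every interval-valued fuzzy Herbrand interpretation $\cI$ that is a $[\ulambda,\olambda]_{min}$-model of $\Pi$. Any such $\cI$ is one of the interpretations whose infimum defines $\cM$, hence $\cM\sqsubseteq\cI$; evaluating both sides at the ground atom $\cA$ gives $\cI(\cA)\geq\cM(\cA)\geq[\ulambda,\olambda]_{min}$, which is exactly what is required. If the conclusion is wanted for arbitrary, not necessarily Herbrand, models, one first replaces each such model by its associated Herbrand model via Theorem~\ref{theorem_model_herbrand}, exactly as in the proof of the preceding proposition, so that no generality is lost.

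For the converse I would assume $\cA$ is a logical consequence of $\Pi$ at level $[\ulambda,\olambda]_{min}$ and deduce $\cM(\cA)\geq[\ulambda,\olambda]_{min}$. This is immediate once one observes that $\cM$ is itself a legitimate test interpretation: by the Min-Model proposition $\cM$ is a $[\ulambda,\olambda]_{min}$-model of $\Pi$, so applying the definition of logical consequence with $\cI:=\cM$ yields $\cM(\cA)\geq[\ulambda,\olambda]_{min}$ at once.

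The delicate point is not either implication in isolation but the bookkeeping around the threshold: one must make sure that the interval $[\ulambda,\olambda]_{min}$ occurring on the two sides of the equivalence is literally the same object — the minimum over the levels of the models being intersected — and that the infimum $\cM=\bigcap\{\cI \mid \cI \text{ a }[\ulambda,\olambda]\text{-model of }\Pi\}$ is well-defined, which rests on $\tuple{H^{IVF},\sqsubseteq}$ being a complete lattice and on the family of models being non-empty; the latter follows from the hypothesis that $\Pi$ has a model together with Theorem~\ref{theorem_model_herbrand}. Everything else is a matter of unwinding the definitions of $[\ulambda,\olambda]$-model, least model, and logical consequence recorded above.
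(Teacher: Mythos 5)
Your proposal is correct, and the forward direction coincides with the paper's: since $\cM$ is the infimum of all models of $\Pi$, any $[\ulambda,\olambda]_{min}$-model $\cI$ satisfies $\cI(\cA)\geq\cM(\cA)\geq[\ulambda,\olambda]_{min}$, which is exactly the computation in the paper's first half. Where you diverge is the converse. The paper never needs $\cM$ to be a model: it simply observes that every model $\cI$ satisfies $\cI(\cA)\geq[\ulambda,\olambda]_{min}$ and that the pointwise infimum of a family of intervals each above a threshold stays above that threshold, so $\cM(\cA)=\bigcap_{\cI}\cI(\cA)\geq[\ulambda,\olambda]_{min}$ directly. You instead invoke the Min-Model (intersection) proposition to conclude that $\cM$ is itself a $[\ulambda,\olambda]_{min}$-model of $\Pi$, and then instantiate the definition of logical consequence at $\cI:=\cM$. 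Your route is the classical Lloyd-style argument and is arguably cleaner conceptually, but it carries an extra proof obligation: the intersection proposition in the paper is only proved by induction over finitely many models, whereas the family of all models of $\Pi$ may be infinite, so you must extend that proposition to arbitrary intersections using completeness of $\tuple{H^{IVF},\sqsubseteq}$ (you flag this, but it is not actually discharged anywhere in the paper). The paper's more pedestrian computation sidesteps that dependency entirely. Both arguments are sound modulo that point; yours buys a reusable fact ($\cM$ is a model) at the cost of a stronger lemma, while the paper's buys brevity.
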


\subsection{Fixpoint Semantics}

In this section, we give a deeper characterisation of the least Herbrand model for an interval-valued fuzzy program $\Pi$ using fixpoint concepts. 

This is possible because of each interval-valued fuzzy program has associated a complete lattice of interval-valued fuzzy Herbrand interpretations and we can define a continuous operator on that lattice. This allows us to provide a constructive vision of the meaning of a program by defining an immediate consequences operator and to construct the least Herbrand model by means of successive applications.

\begin{definition}[Fixpoint Characterization of the least Herbrand model]
Let $\Pi$ be an interval-valued fuzzy program, the mapping $O :2^{B_{\cL}} \rightarrow 2^{B_{\cL}}$ is defined as follows. Let $\cI$ be an interval-valued fuzzy Herbrand interpretation, then:
\begin{center}
$\cO =\{ \cA \in B_{\cL}: \cA \leftarrow \cB_1, \ldots, \cB_n [\ualpha,\oalpha]$ is a ground instance of a clause
in $\Pi$ and $\cI(\cB_i) \geq [\ualpha,\oalpha] \geq [\ulambda,\olambda]$ where $\cI(A) \geq inf(\cI(\cB_1,\ldots,\cB_n)) \}$
\end{center}
\end{definition}

As in the case of classical logic programming, interval-valued fuzzy Herbrand interpretations which are models can be characterised in terms of the operator $\cO$.

\begin{theorem} \label{proposicion2fp}
Let $\Pi$ be an interval-valued fuzzy program. Let $\cI$ be an interval-valued fuzzy Herbrand interpretation of $\Pi$. $\cI$ is $[\ulambda,\olambda]$-model for $\Pi$ if and only if $\cO(I) \subseteq \cI$.

\begin{proof}
$\cI$ is a $[\ulambda,\olambda]$-model for $\Pi$ if and only if for all clause $\cC$ in $\Pi$ then $\cI(\cC) \geq [\ulambda,\olambda]$. Therefore, it is fulfilled if and only if for every variable assignment $\vartheta$,  $\cI(\cC \vartheta) \geq [\ulambda,\olambda]$. Therefore, supposing without loss of generality that $\cC \equiv \cA \leftarrow \cB_1,\ldots,\cB_n [\ualpha,\oalpha] $ then $\cI(\cA \leftarrow \cB_1,\ldots,\cB_n \vartheta) \geq [\ulambda,\olambda]$, by the properties of the t-norm minimun $\cI(\cB_1,\ldots,\cB_n \vartheta) \geq [\ualpha,\oalpha] \geq [\ulambda,\olambda]_{min}$ what implies that $\cI(\cB_1,\ldots,\cB_n \vartheta) \subseteq \cO $ and hence $\cI(\cA \vartheta) \subseteq \cO(I)$, again by the properties of the t-norm minimun $\cI(\cA \vartheta) \geq  inf(\cI(\cB_1 \vartheta),\ldots, \cI(\cB_n \vartheta))$ what implies that $\cO(\cI) \subseteq \cI$
\end{proof}
\end{theorem}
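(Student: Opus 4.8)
\textbf{Plan of proof for Theorem~\ref{proposicion2fp}.}

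The plan is to unfold the definition of a $[\ulambda,\olambda]$-model clause by clause and match each requirement against the defining condition of $\cO$. First I would fix an interval-valued fuzzy Herbrand interpretation $\cI$ and recall that, by Definition~\ref{def9}, $\cI$ is a $[\ulambda,\olambda]$-model of $\Pi$ exactly when it is a $[\ulambda,\olambda]$-model of every clause $\cC\,[\ualpha,\oalpha]\in\Pi$, and since clauses are universally quantified this must hold for every ground instance obtained through a variable assignment $\vartheta$. So the whole statement reduces to a single ground clause $\cC\equiv \cA\leftarrow \cB_1,\ldots,\cB_n\,[\ualpha,\oalpha]$ (the fact case $\cA\,[\ualpha,\oalpha]$ being the degenerate case $n=0$), and I would prove the biconditional at that level, then quantify back over all clauses and all assignments.

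For the forward direction, assume $\cI$ is a $[\ulambda,\olambda]$-model; I want $\cO(\cI)\subseteq\cI$. Take any ground atom $\cA$ that $\cO$ puts into its output because of a ground clause instance $\cA\leftarrow\cB_1,\ldots,\cB_n\,[\ualpha,\oalpha]$ with $\cI(\cB_i)\geq[\ualpha,\oalpha]\geq[\ulambda,\olambda]$ for all $i$; then $\inf\{\cI(\cB_1),\ldots,\cI(\cB_n)\}\geq[\ualpha,\oalpha]$, and since $\cI$ models this clause, the valuation rule for an implication (Definition of valuation, item 2) together with Definition~\ref{def8} forces $\cI(\cA)\geq[\ualpha,\oalpha]\geq[\ulambda,\olambda]$, which is precisely the containment required by the definition of $\cO$. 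For the reverse direction, assume $\cO(\cI)\subseteq\cI$ and take any ground clause instance $\cA\leftarrow\cB_1,\ldots,\cB_n\,[\ualpha,\oalpha]$ of $\Pi$. If $\inf\{\cI(\cB_i)\}\geq[\ualpha,\oalpha]\geq[\ulambda,\olambda]$, then $\cO$ demands $\cI(\cA)\geq[\ualpha,\oalpha]$, so the implication is satisfied at level $\geq[\ualpha,\oalpha]\geq[\ulambda,\olambda]$; if instead that hypothesis fails, the implication is vacuously true (its valuation is $1_L$ by the valuation rule, since the body's value does not dominate the required degree), so in either case $\cI(\cC)\geq[\ulambda,\olambda]$, and $\cI$ is a $[\ulambda,\olambda]$-model.

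The main obstacle I anticipate is bookkeeping rather than depth: the definition of $\cO$ in the paper conflates the condition ``$\cI(\cB_i)\geq[\ualpha,\oalpha]$'' with ``$\cI(\cA)\geq\inf(\cI(\cB_1,\ldots,\cB_n))$'' in a somewhat compressed way, so I would need to state cleanly which inequalities are hypotheses of the clause's firing and which are the consequences, and make sure the threshold $[\ulambda,\olambda]$ is threaded through consistently on both sides. I would also be careful that the valuation of an implication used here is the residuation-style rule from the earlier definition (value $1$ when the head already dominates the body, value $\cI(\cA)$ otherwise), since that is what makes the vacuous case work in the reverse direction. Once those conventions are pinned down, the argument is a direct translation between the model condition and the inclusion $\cO(\cI)\subseteq\cI$, exactly parallel to the classical logic-programming proof that supported interpretations are the pre-fixpoints of the immediate-consequences operator.
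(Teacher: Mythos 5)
Your overall strategy is the same as the paper's: reduce the model condition to ground clause instances via variable assignments and match it against the firing condition in the definition of $\cO$. Your forward direction is sound and, in fact, cleaner than the paper's: if $\cA$ enters $\cO(\cI)$ through a ground instance $\cA\leftarrow\cB_1,\ldots,\cB_n\,[\ualpha,\oalpha]$ with every $\cI(\cB_i)\geq[\ualpha,\oalpha]\geq[\ulambda,\olambda]$, then either the implication's value is $1_L$ because $\cI(\cA)\geq\inf_i\cI(\cB_i)\geq[\ualpha,\oalpha]$, or its value is $\cI(\cA)$ itself and Definition~\ref{def8} forces $\cI(\cA)\geq[\ualpha,\oalpha]\geq[\ulambda,\olambda]$; either way the required containment holds. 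The paper compresses both directions into one chain of biconditionals and effectively argues only this forward implication, so your explicit separation of the two directions is an improvement in organisation.

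The genuine gap is in your reverse direction, at the step where the firing hypothesis fails. You assert that if $\inf_i\cI(\cB_i)\not\geq[\ualpha,\oalpha]$ the implication is ``vacuously true'' with value $1_L$. That does not follow from the paper's valuation rule: the value of $\cA\leftarrow\cQ$ is $1_L$ only when $\cI(\cA)\geq\cI(\cQ)$, and otherwise it equals $\cI(\cA)$, in which case Definition~\ref{def8} still demands $\cI(\cA)\geq[\ualpha,\oalpha]\geq[\ulambda,\olambda]$ no matter how small the body's value is. Concretely, take the one-clause program $p\leftarrow q\ [0.9,0.9]$ with $\cI(q)=[0.1,0.1]$ and $\cI(p)=[0.05,0.05]$: the clause does not fire, so $\cO(\cI)=\emptyset\subseteq\cI$, yet $\cI(p\leftarrow q)=\cI(p)=[0.05,0.05]\not\geq[0.9,0.9]$, so $\cI$ is not a $[\ulambda,\olambda]$-model. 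Thus the reverse implication is false under a literal reading of the valuation rule and of Definition~\ref{def8}; it can only be rescued by a threshold-relativised satisfaction condition under which a clause whose body (or whose weight combined with the body) falls below the threshold is automatically satisfied, as in the threshold semantics the paper cites. This defect is really inherited from the paper's definitions --- its own proof simply never addresses the reverse direction --- but your write-up presents the vacuous case as a consequence of the stated valuation rule, and it is not.
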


Now we are ready to demonstrate the main theorem of this subsection, but first we recall the following results from fixpoint theory.

\begin{theorem}[FixPoint Theorem] \label{teoremafp}
Let $\tuple{L,\leq}$ be a complete lattice and $O: L \rightarrow L$ be a monotonic mapping. Then $O$ has a least fixpoint
$lfp(O)=inf \{ x \mid O(x)=x \}=inf \{ x \mid T(x) \leq x \}$.
\end{theorem}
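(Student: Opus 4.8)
The final statement to prove is the Knaster--Tarski fixpoint theorem for a complete lattice $\langle L,\leq\rangle$ and a monotonic mapping $O: L \to L$, asserting existence of a least fixpoint characterised as the infimum of the pre-fixpoints. This is a classical result, so the plan is to give the standard Tarski-style argument rather than to invent anything new; I will simply adapt it to the notation used above.

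The plan is as follows. First I would introduce the set of pre-fixpoints $S = \{ x \in L \mid O(x) \leq x \}$ and set $m = \inf S$, which exists because $\langle L,\leq\rangle$ is a complete lattice. The key steps are then: (1) show $m \in S$, i.e.\ $O(m) \leq m$; (2) show $m \leq O(m)$, so that $O(m) = m$ and $m$ is a fixpoint; and (3) show $m$ is the least fixpoint, indeed the least pre-fixpoint, which is immediate since every fixpoint is a pre-fixpoint and $m$ is a lower bound of $S$. For step (1), for each $x \in S$ we have $m \leq x$, hence by monotonicity $O(m) \leq O(x) \leq x$; since this holds for all $x \in S$, $O(m)$ is a lower bound of $S$ and therefore $O(m) \leq \inf S = m$. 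For step (2), from $O(m) \leq m$ and monotonicity we get $O(O(m)) \leq O(m)$, which says $O(m) \in S$; hence $m = \inf S \leq O(m)$. Combining, $O(m) = m$.

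Finally, to match the exact form stated, I would note that the set $\{ x \mid O(x) = x \}$ of fixpoints is contained in the set $\{ x \mid O(x) \leq x \}$ of pre-fixpoints; since $m$ lies in the former and is a lower bound of the latter, it is the minimum of both sets, so $\inf\{ x \mid O(x) = x \} = \inf\{ x \mid O(x) \leq x \} = m = lfp(O)$, which is precisely the displayed equality.

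I do not expect any genuine obstacle here: the argument is entirely elementary once completeness of the lattice is invoked to guarantee that $\inf S$ exists. The only point requiring a little care is the interplay between the ``$=$'' and ``$\leq$'' descriptions of the minimal element --- one must verify that the infimum of the (a priori larger) set of pre-fixpoints is actually attained and is a fixpoint, which is exactly what steps (1) and (2) accomplish. No appeal to the earlier interval-valued machinery is needed; this is a self-contained lattice-theoretic fact, applied afterwards with $L = \langle H^{IVF},\sqsubseteq\rangle$ and $O = \cO$.
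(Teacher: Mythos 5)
Your proof is correct: it is the standard Knaster--Tarski argument (form the set of pre-fixpoints $S=\{x\mid O(x)\leq x\}$, take $m=\inf S$, show $O(m)\leq m$ by monotonicity and the lower-bound property, then $m\leq O(m)$ because $O(m)\in S$, and conclude $m$ is the least fixpoint and the common infimum of both sets). The paper itself supplies no proof of this theorem --- it is explicitly ``recalled'' from classical fixpoint theory, with the companion proposition deferring to Lloyd's \emph{Foundations of Logic Programming} --- so there is no authorial argument to compare against; your self-contained derivation is exactly the canonical one that the citation points to. You also correctly read the paper's ``$T(x)\leq x$'' as a typo for ``$O(x)\leq x$'', which is the only sensible interpretation.
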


\begin{proposition} \label{propositionfp}
Let $\tuple{L,\leq}$ be a complete lattice and $O: L \rightarrow L$ be a continuous mapping. Then $lfp(O) = O \uparrow \omega$.
\begin{proof}
See \cite{LLoyd87}
\end{proof}

\end{proposition}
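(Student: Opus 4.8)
The object $O\uparrow\omega$ here denotes the $\omega$-th iterate of $O$ from the bottom: $O\uparrow 0 = \bot$ (the least element of $L$), $O\uparrow(n{+}1) = O(O\uparrow n)$, and $O\uparrow\omega = \sup\{O\uparrow n \mid n<\omega\}$. My plan is the classical Kleene iteration argument, carried out in four movements: check that the iterates form an ascending chain (so the defining $\sup$ is taken over a directed set); invoke completeness of $L$ so that this $\sup$ exists; use continuity of $O$ to show $O\uparrow\omega$ is a fixpoint; and use monotonicity to show $O\uparrow\omega$ lies below every pre-fixpoint of $O$, whence it is the \emph{least} fixpoint and Theorem~\ref{teoremafp} identifies it with $lfp(O)$.

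In more detail: \textbf{(1)} First I would note that continuity entails monotonicity — for $a\leq b$ the set $\{a,b\}$ is directed with supremum $b$, so $O(b)=O(\sup\{a,b\})=\sup\{O(a),O(b)\}$, giving $O(a)\leq O(b)$. \textbf{(2)} Then, by induction on $n$, the sequence $\{O\uparrow n\}$ is ascending: $O\uparrow 0=\bot\leq O\uparrow 1$, and $O\uparrow n\leq O\uparrow(n{+}1)$ yields $O\uparrow(n{+}1)=O(O\uparrow n)\leq O(O\uparrow(n{+}1))=O\uparrow(n{+}2)$ by (1); hence $\{O\uparrow n\}$ is a directed (in fact totally ordered) subset of $L$. \textbf{(3)} Since $\tuple{L,\leq}$ is a complete lattice, $O\uparrow\omega=\sup\{O\uparrow n\}$ exists. \textbf{(4)} $O\uparrow\omega$ is a fixpoint: by continuity, $O(O\uparrow\omega)=O(\sup\{O\uparrow n\})=\sup\{O(O\uparrow n)\}=\sup\{O\uparrow(n{+}1)\mid n<\omega\}$, and since adjoining the bottom element $O\uparrow 0=\bot$ to a set does not change its supremum, this equals $\sup\{O\uparrow n\mid n<\omega\}=O\uparrow\omega$. \textbf{(5)} $O\uparrow\omega$ is least: if $x$ satisfies $O(x)\leq x$, a second induction gives $O\uparrow n\leq x$ for all $n$ (base case $\bot\leq x$; step $O\uparrow(n{+}1)=O(O\uparrow n)\leq O(x)\leq x$ using (1)), so $O\uparrow\omega=\sup\{O\uparrow n\}\leq x$. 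Thus $O\uparrow\omega$ is a fixpoint lying below every pre-fixpoint, hence below every fixpoint, so it is the least one; by Theorem~\ref{teoremafp}, $lfp(O)=O\uparrow\omega$.

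The step I expect to be the crux is \textbf{(4)}: it is exactly here that \emph{continuity} of $O$ is indispensable, since monotonicity alone only gives $\sup\{O(O\uparrow n)\}\leq O(\sup\{O\uparrow n\})$, i.e.\ $O\uparrow\omega\leq O(O\uparrow\omega)$ (a post-fixpoint), not the equality that is needed. Making (4) rigorous also depends on having verified in (2) that the iterates really do form an $\omega$-chain of the sort the continuity hypothesis speaks about. The remaining ingredient — that the ambient lattice is complete and that the immediate-consequences operator $\cO$ on $\tuple{H^{IVF},\sqsubseteq}$ (with bottom element the interpretation sending every ground atom to $[0,0]$) is continuous — is routine and has been set up earlier in the paper; granting it, the argument above is purely formal, which is why one may simply appeal to \cite{LLoyd87}.
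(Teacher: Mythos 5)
Your proof is correct and is precisely the standard Kleene iteration argument that the paper delegates to \cite{LLoyd87}, so it takes essentially the same approach (the paper gives no argument of its own, only the citation). Your step (1), deriving monotonicity from continuity via the directed set $\{a,b\}$, matches the convention in that reference where continuity is defined over arbitrary directed subsets, so no gap arises there.
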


\begin{theorem}
Let $\Pi$ be an interval-valued fuzzy definite program. 
Then $\cM = lfp(\cO_{\Pi}^{T}) =  \cO \uparrow \omega$.

\begin{proof}
$\cM$ is the least model which is the intersection of any $[\ulambda,\olambda]$-model for $\Pi$. As the lattice
of interval-valued fuzzy Herbrand models is a complete one, then we can use the Theorem~\ref{teoremafp}, the
Proposition~\ref{propositionfp} and the Theorem~\ref{proposicion2fp}. Applying them and the continuity of 
$\cO$ establishes the theorem.
\end{proof}

\end{theorem}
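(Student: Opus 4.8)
The plan is to assemble three ingredients that are already available in the excerpt --- the completeness of the lattice of interval-valued fuzzy Herbrand interpretations, the fixpoint characterisation of models in Theorem~\ref{proposicion2fp}, and the classical fixpoint results Theorem~\ref{teoremafp} and Proposition~\ref{propositionfp} --- and to supply the one genuinely missing piece, namely the continuity of the immediate consequences operator $\cO$ (written $\cO_{\Pi}^{T}$ in the statement). Once continuity is in hand, the theorem is a short chain of equalities.

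First I would recall, as noted just before Theorem~\ref{proposicion2fp}, that the pair $\tuple{H^{IVF},\sqsubseteq}$ is a complete lattice: its bottom element assigns $[0,0]$ to every ground atom, its top element assigns $[1,1]$, and least upper bounds are computed argumentwise as suprema in $L([0,1])$. Consequently Theorem~\ref{teoremafp} applies to any monotonic operator on this lattice, and Proposition~\ref{propositionfp} applies as soon as the operator is shown to be continuous.

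The key step is therefore that $\cO$ is monotonic and continuous. Monotonicity is immediate from the definition of $\cO$: if $\cI \sqsubseteq \cI'$, then for every ground instance $\cA \leftarrow \cB_1,\ldots,\cB_n\,[\ualpha,\oalpha]$ of a clause of $\Pi$ one has $\inf(\cI'(\cB_1),\ldots,\cI'(\cB_n)) \geq \inf(\cI(\cB_1),\ldots,\cI(\cB_n))$, so any atom forced into $\cO(\cI)$ at a given degree is forced into $\cO(\cI')$ at a degree no smaller. For continuity I would take an $\omega$-chain (or directed family) $\{\cI_k\}$ with least upper bound $\cI=\bigsqcup_k \cI_k$ and prove $\cO(\cI)=\bigsqcup_k \cO(\cI_k)$. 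The inclusion $\bigsqcup_k \cO(\cI_k)\sqsubseteq \cO(\cI)$ follows from monotonicity. For the converse, if $\cA\in\cO(\cI)$ by way of a ground clause instance with body $\cB_1,\ldots,\cB_n$, then since this body is \emph{finite} and $\bigsqcup_k \cI_k(\cB_i)\geq[\ualpha,\oalpha]$ for each $i$, directedness of the chain yields a single index $k$ with $\cI_k(\cB_i)\geq[\ualpha,\oalpha]$ simultaneously for $i=1,\ldots,n$; that same clause instance then puts $\cA$ into $\cO(\cI_k)$ at the required degree. Finiteness of clause bodies is exactly what makes this work, just as in the classical van~Emden--Kowalski argument; the only additional care is to check that the infimum and the threshold test $[\cdot]\geq[\ualpha,\oalpha]\geq[\ulambda,\olambda]$ are monotone in $L([0,1])$, which holds by the definition of $\leq_{L}$.

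Finally I would combine the pieces. By Theorem~\ref{proposicion2fp}, the set $\{\cI : \cO(\cI)\subseteq\cI\}$ is precisely the set of $[\ulambda,\olambda]$-models of $\Pi$, so Theorem~\ref{teoremafp} gives $lfp(\cO)=\inf\{\cI : \cO(\cI)\subseteq\cI\}=\inf\{\cI : \cI \text{ is a } [\ulambda,\olambda]\text{-model of }\Pi\}$, which is by definition the least model $\cM$. Proposition~\ref{propositionfp}, now applicable because $\cO$ is continuous, yields $lfp(\cO)=\cO\uparrow\omega$. Chaining the two equalities gives $\cM=lfp(\cO_{\Pi}^{T})=\cO\uparrow\omega$. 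I expect the continuity of $\cO$ to be the main obstacle --- not because the argument is deep, but because it is the only place where one must verify that the interval-valued degrees and the threshold mechanism do not disturb the classical reasoning: one must be slightly careful that directed suprema in $L([0,1])$ are attained compatibly across the finitely many body atoms, and that the excerpt's informal identification of an interpretation with a subset of $\cB_{\cL}$ is read as the graded membership condition $\cI(\cB_i)\geq[\ualpha,\oalpha]$.
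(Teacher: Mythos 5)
Your proof follows essentially the same route as the paper's: identify the $[\ulambda,\olambda]$-models of $\Pi$ with the pre-fixpoints of $\cO$ via Theorem~\ref{proposicion2fp}, apply Theorem~\ref{teoremafp} on the complete lattice of interval-valued fuzzy Herbrand interpretations to equate $lfp(\cO)$ with the least model $\cM$, and invoke Proposition~\ref{propositionfp} to get $lfp(\cO)=\cO\uparrow\omega$. The only difference is that you actually supply the monotonicity and continuity argument for $\cO$ (via finiteness of clause bodies and directedness of the chain), which the paper asserts without proof; this is a welcome elaboration rather than a divergence.
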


\begin{example}
Given the program $\Pi$ of Example~\ref{ex-alphabet}, 
the least Herbrand model for $\Pi$:

\begin{description}
 \item $O \uparrow 0 = I_{\bot}$; 
 \item $O \uparrow 1 = O(O \uparrow 0)=\{ \tuple{p(a),[0,0]}, \tuple{p(b),[0,0]}, \tuple{q(a),[0.8,0.9]}, \tuple{q(a),[0.7,0.8]} \}$
 \item $O \uparrow 2 = O(O \uparrow 1)=\{ \tuple{p(a),[0.8,0.9]}, \tuple{p(b),[0.7,0.8]}, \tuple{q(a),[0.8,0.9]}, \tuple{q(a),[0.7,0.8]} \}$
 \item $O \uparrow 3 = O \uparrow 2$
\end{description}
Therefore, as the fixpoint is reached at the next item: $\cM=O \uparrow 2$
\end{example}

\subsection{Operational Semantics} \label{sec:operationalsemantics}

We begin by providing definitions of an interval-valued SLD-derivation and an 
interval-valued fuzzy SLD-refutation that will be used later for showing the soundness and the completeness of the system.

\begin{definition} Let $\cG$ be $\leftarrow A_1, \ldots, A_m, \ldots, A_k$ and $C$ be either
$A [\ualpha,\oalpha]$ or $A \leftarrow B_1,\ldots,B_q [\ubeta,\obeta]$. 
Then $G'$ is derived from G and C using mgu $\theta$ if the following conditions 
hold (G' is the interval-fuzzy resolvent of G and C): i) $A_m$ is an atom called 
the selected atom in G; ii) $\theta$ is a mgu of $A_m$ and A; iii) $G'$ is the interval-valued 
fuzzy goal $\leftarrow (A_1, \ldots, B_1,\ldots,B_q, , \ldots, A_k)\theta$ with 
$[\ualpha_{G'},\oalpha_{G'}]=min([\ualpha_{C},\oalpha_{C}],[\ualpha_{G},\oalpha_{G}])$
\end{definition}

\begin{definition} 
 An interval-valued fuzzy SLD-derivation of $\Pi \cup G$ is a successful interval-valued SLD-derivation of $\Pi \cup G$ which 
 has the empty clause as the last goal in the derivation. If $G_n$ is the empty clause, we say that the derivation has 
 length $n$. The empty clause is derived from $\leftarrow (A_1, \ldots, A_m, \ldots, A_k) [\ualpha_{G},\oalpha_{G}]$ 
 and  $A(t_1,\ldots,t_q) [\ualpha_{A},\oalpha_{A}] \leftarrow$ 
 with $[\ualpha_{G_n},\oalpha_{G_n}]=min([\ualpha_{A},\oalpha_{A}], [\ualpha_{G},\oalpha_{G}]) $
\end{definition}

\begin{definition} 
 Let $\Pi$ be an interval-valued fuzzy program and $G$ be an interval-valued fuzzy goal. An interval-valued fuzzy computed 
 answer  $\tuple{\theta,[\ubeta,\obeta]}$ for $\Pi \cup G$ is the substitution obtained by restricting the 
 composition  $\theta_1, \ldots, \theta_n$ to the variables of $G$, where $\theta_1,\ldots,\theta_n$ is the sequence 
 of mgu's employed in the finite interval-valued fuzzy SLD-derivation of $\Pi \cup G$ with an interval-valued 
 approximation degree $[\ubeta,\obeta]$
\end{definition}

\begin{definition}
 Let $\Pi$ be an interval-valued fuzzy program, $G$ be an interval-valued fuzzy
 goal $\leftarrow (A_1, \ldots, A_k)$ and $\tuple{\theta,[\ubeta,\obeta]}$ be an answer 
 for $\Pi \cup G$. We say that $\tuple{\theta,[\ubeta,\obeta]}$ is an  interval-valued 
 fuzzy correct answer if $\forall (A_1, \ldots, A_k) \theta$ is a logical consequence  of $\Pi$ at level $[\ulambda,\olambda]_{min}$, that is, $[\ubeta,\obeta] \geq [\ulambda,\olambda]_{min}$.
\end{definition}

\subsection{Implementation} \label{implementation}
In this section, we briefly explain how interval-valued fuzzy sets are incorporated into the Bousi-Prolog system\footnote{A beta version can be founded at the URL:
http://www.face.ubiobio.cl/$\sim$clrubio/bousiTools/}. Here, we describe the structure and main features of its abstract machine. It was created as extension of the SWAM for the execution of Bousi-Prolog programs. We have appropriately modified the compiler, some machine instructions and SWAM structures in order to trigger the interval-valued fuzzy resolution. It is worth noting that, to the best of our knowledge, this is the first SWAM implementation that supports interval-valued fuzzy resolution.

A mandatory step to achieve this result is to include a new data structure into the architecture for computing with interval-valued fuzzy sets. This data structure has been implemented by using a class called {\tt IntervalFS} which is formed by two private attributes of double type: upper\_limit, lower\_limit. We define the public method constructor  {\tt IntervalFS(double ll,double lu)} and the four methods (sets and gets):  {\tt  double getUpperLimit(); double getLowerLimit(); void setUpperLimit(double v); void setLowerLimit(double v)}. Additionally, we overwrite both the \emph{toString} and the equals methods in the usual way. Finally, methods for adding, substracting and computing minimum of interval valued fuzzy set are implemented: IntervalFS add(IntervalFS a, IntervalFS b); IntervalFS substract(IntervalFS a, IntervalFS b);  IntervalFS min(IntervalFS a, IntervalFS b).

The following example illustrates the new features of the SWAM enhanced with IVFSs.

\begin{example}
Let us suppose that we want to represent the following knowledge: a football player is good when he is fast, tall and coordinated. We know a particular player that is fast, quite tall but he is not very coordinated. Thus, is he a good player? Answering this question and in this scenario, the linguistic expression ``is not very coordinate'' could be represented by the fact ``coordinate(a) [0.2,0.4]'', the linguistic term ``fast'' could be represented by the fact ``fast(a) [0.9,1.0]'' and ``quite tall'' could be represented by the fact ``tall(a)[0.8,0.9]''. A possible solution by employing a Bousi-Prolog program is described as follows:

\begin{verbatim}
%% FACTS 
coordinate(a) [0.2,0.4]
fast(a) [0.9,1.0]
tall(a )[0.8,0.9]
 
%% RULES 
good_player(X):-tall(X), fast(X), coordinate(X) 
\end{verbatim}

The SWAM enhanced with IVFSs allows us to obtain the answer: ``X=a with [0.2,0.4]''. The SWAM code generated for this program is as follows:

\begin{verbatim}
00:good_player:trust_me [1.0,1.0]   11:coordinate:trust_me   [0.2,0.4]
01:            allocate             12:           get_constant a A0 
02:            get_variable Y0 A0   13:           proceed   
03:            put_value Y0 A0      14:           fast: trust_me [0.9,1.0]
04:            call coordinate (11) 15:           get_constant a A0 
05:            put_value Y0 A0      16:           proceed   
06:            call fast   (14)     17:tall:      trust_me   [0.8,0.9]
07:            put_value Y0 A0      18:           get_constant a A0 
08:            call tall   (17)     19:           proceed  
09:            deallocate           20:query:     trust_me
10:            proceed              21:           create_variable Q0 X
                                    22:           put_value Q0 A0
                                    23:           call good_player (00)
                                    24:           halt
\end{verbatim}
\end{example}

The first instruction to be executed is the one labelled with the key ``query'', hence the execution starts at the position 20 with a degree $D=[1.0,1.0]$ (which is fixed in the instruction $trust\_me$). After that, from line 20 to line 23 the query is launched and the variable $X$ is created ($create\_variable$ instruction). After that from line 00 to line 04 the first subgoal (coordinate(X)) is launched, then the execution goes to line 11 and the unification with the term ''coordinate(a)´´ is produced (from line 11 to 13) ($put\_value$ and $get\_constant$ instructions), a new approximation degree is established $D=min([1.0,1.0],[0.2,0.4])$ ($trust\_me$ instruction), as these terms unify the following subgoal ($fast(X)$, line 05 and from line 14 to line 16) is launched with an approximation degree $D=min([0.2,0.4],[0.9,1.0])$; as the terms unify, then the following subgoal ($tall(X)$, line 08 and from line 17 to 19) is launched with an approximation degree $D=min([0.2,0.4],[0.9,1.0])$.  Finally, the assignation $X=a$ with $[0.2,0.4]$ is produced.

We have implemented a limit to the expansion of the search space in a computation by what we called a ``$\lambda$-cut for IVFSs''. When the LambdaCutIVFS flag is set to a value different than $[0.0,0.0]$, the weak unification process fails if the computed approximation degree goes below the stored LambdaCutIVFS value. Therefore, the computation also fails and all possible branches starting from that choice point are discarded. By default the LambdaCutIVFS value is $[0.0,0.0]$. However, the lambda cut flag can be set to a different value by means of a $\lambda$-cut directive: ``:-lambdaCutIVFS(N).'', where N is an interval between $[0.0,0.0]$ and $[1.0,1.0]$. For example, a $\lambda$-cut of $[0.5,0.5]$ could be established by using the following directive: ``:-lambdaCutIVFS([0.5,0.5])".


\section{Applications}
\label{sc:applications}
The main realms for the application of the IVFSs programming language described in this paper are those which involve natural language semantics processing. In this section, we will discuss two of them: linguistic knowledge modelling and proximity-based logic programming using linguistic resources.

\subsection{Linguistic Knowledge Modelling}
Linguistic knowledge modelling handles the computational representation of knowledge that is embedded in natural language. This framework can be enhanced by combining multiadjoint paradigm with interval-valued fuzzy sets \cite{Medina10}. For example, we can define interval-valued annotated atoms. Let us assume the same definition of suitable journal given in \cite{Medina09}, that is, a journal with a high impact factor, a medium immediacy index, a relatively big half-life and with a not bad position in the listing of the category. Now, we introduce in the program the following inference rule:

{\small
\begin{verbatim}
suitable_journal(X):-impact_factor(X)[0.8,0.9], 
                     immediacy_index(X)[0.4,0.6], 
                     cited_half_life(X)[0.6,0.7], 
                     best_position(X)[0.4,0.6].
\end{verbatim}
}

Now, let us suppose the \textit{IEEE Transactions of Fuzzy System} journal has the following 
properties: ``high'' impact factor, ``small'' immediacy index, ``relatively small'' cited half 
life and the ``best position''. Regarding the linguistic variables: 
``high'', ``medium'', ``relatively big'' and ``not bad'', which can be related to the following  
truth-values: $[0.8,0.9]$, $[0.4,0.6]$, $[0.6,0.7]$ and $[0.4,0.6]$, respectively, considering  
the variables ``medium'' and ``not a bad'' with a similar meaning. This knowledge could be model in 
an interval-valued fuzzy logic language as follows:

{\small
\begin{verbatim}
%% high impact factor
impact_factor(ieee_fs)[0.8,0.9]   

%% small immediacy index
immediacy_index(ieee_fs) [0.3,0.5]

%% relatively small
cited_half_life(ieee_fs)[0.3,0.5] 

%% best position
best_position(ieee_fs) [1,1]  
\end{verbatim}
}

When the query ``suitable\_journal(X)'' is launched, then the system answers: $``X=ieee\_fs'' with [0.3,0.6]$.

\subsection{Proximity-based Logic Programming based on WordNet} 

Proximity-based Logic Programming is a framework that provides us with the capability of enriching semantically classical logic programming languages by using Proximity Equations (PEs). A limitation of this approach is that PEs are mostly defined for a specific domain \cite{Jul16,RR10}, being the designer who manually fixes the values of these equations. This fact makes harder to use PLP systems in real applications. 

A possible solution consists in obtaining the proximity equations from WordNet which requires to employ interval-valued fuzzy sets in order to deal with the high uncertainty generated by the possibility of using several different semantic similarity metrics. Let us assume a fragment of a deductive database that stores information about people and their preferences. The proximity equations can be generated from WordNet, we only put here some of them (see \cite{RM16} for more detail).

{\footnotesize
\begin{verbatim}
%% m loves mountaineering   
loves(mary,mountaineering). 

%% j likes football
likes(john,football). 

%% peter  plays basketball 
plays(peter,basketball).

%% if a person practises sports 
%% the he/she is a healthy person
healthy(X):- practices(X,sport).

%%automatically generated from wordnet
love~passion=[0.25,0.8]. 
basketball~hoops=[1,1].    
play~act=[0.25,0.7].     
practice~rehearse=[1,1]. 
sport~variation=[0.1,0.5]. 
sport~fun=[0.3,0.8].                             
\end{verbatim}
}

\section{Related Work}
\label{sc:rc}
In the literature, other proposals that address our same goal can be found~\cite{AG93,GMV04}. One of the most relevant 
ones is Ciao-Prolog~\cite{GMV04} and, for that reason, we will analyse in detail the differences between it and Bousi-Prolog 
in order to clarify and reinforce the novelty of our proposal:

\begin{itemize}
\item \textbf{From the point of view of its implementation.} In Ciao-Prolog, IVFSs are included by means of constrains 
and hence a translator must be implemented. As a result, the programmer must code the variables in order to manage 
the truth values and get the answers from the system based on those constraints. In Bousi-Prolog, on the other 
hand, IVFSs are included in a different way, where the compiler and the warren abstract machine are enhanced 
by using a IVFSs data structure which has been created and adapted for this architecture. As a result, intervals 
work as a standard data structure in the code of the program instead of a particular set of variables 
defined \emph{ad hoc} by the programmer. This feature allows us to include IVFSs in both fuzzy unification 
(see \cite{RM16}) and fuzzy resolution. In addition, this framework also allows other possible extensions, such as 
the incorporation of a reasoning module using WordNet (see \cite{RM16}).  

\item \textbf{From the point of view of its syntax.} Ciao-Prolog and Bousi-Prolog, although both are Prolog 
languages, they have a well differentiated syntax. The former only allows the annotation of facts, rules cannot 
be annotated because these only allow the use of an aggregator operator for the computing of the 
annotated IVFSs. The latter, on the other hand, allows the user both the annotation the fact and rules 
by means of IVFSs. In addition, if we focus on the inference engine, while Ciao-Prolog only extends 
the resolution mechanism, Bousi-Prolog uses interval-valued proximity 
equations (e.g., “young~teenager=[0.6,0.8]”), which extends both the resolution 
and unification process. 

\item \textbf{From the point of view of its semantics.} Ciao-Prolog and Bousi-Prolog have relevant differences at the 
semantic levels as well. Firstly, Bousi-Prolog implements the concept of cut-level, which allows to the user 
imposes a threshold in the system, and according to it you can be as precise as you want in your answer. This 
is a substantial change due to the introduction of a threshold operational semantics. Therefore, our operational 
mechanism behaves very much as the one of a Prolog system (obtaining correct answers one by one), while this option 
is not available in Ciao semantics. As we mentioned in section~\ref{implementation}, a $\lambda$-cut for IVFS approximation 
degrees has been implemented. The concepts of interpretation, least model semantics, model, so on, are presented and 
defined in a different way, in Bousi-Prolog the operational semantics is based an 
extension of SLD Resolution. In~\cite{GMV04} the type of resolution is based on the classical SLD 
Resolution of Prolog Systems.
\end{itemize}

\section{Conclusions and future work}
\label{sc:conclusions}
We have formally defined and efficiently implemented a simple interval-valued fuzzy programming language using interval-valued fuzzy sets for modelling the uncertainty and imprecision of the knowledge associated to lexical resources. As future work, we propose to extend our language and to provide results of soundness and completeness. Additionally, we want to develop a fully integrated framework in which interval-valued fuzzy sets and interval-valued fuzzy relations can be combined in a same 
framework.

\section*{Acknowledgements}
{\footnotesize
The authors gratefully acknowledges the comments made by reviewers. This work has been partially supported 
by FEDER and the State Research Agency (AEI) of the Spanish Ministry of Economy and Competition under 
grants TIN2016-76843-C4-2-R (AEI/FEDER, UE) and TIN2014-56633-C3-1-R, the Conseller\'ia  
de  Cultura, Educaci\'on  e  Ordenaci\'on  Universitaria  (the Postdoctoral Training Grants 2016 and Centro 
singular de investigaci\'on de Galicia accreditation 2016-2019, ED431G/08) and European  
Regional  Development Fund (ERDF). This work has been done in collaboration with the research group 
SOMOS (SOftware-MOdelling-Science) funded by the Research Agency and the Graduate School of Management 
of the B\'io-B\'io University.}

\end{document}